\numberwithin{equation}{section}
\numberwithin{footnote}{section}
\newtheorem{Theorem}{Theorem}[section]
\newtheorem{cor}{Corollary}[section]
\newcommand{\bp}{\begin{proof}}
\newcommand{\ep}{\end{proof}}
\newcommand{\be}{\begin{equation}}
\newcommand{\ee}{\end{equation}}
\newcommand{\bes}{\begin{equation*}}
\newcommand{\ees}{\end{equation*}}
\newcommand{\mbf}{\mathbf}
\begin{document}
\date{\small\textsl{\today}}
\title{
Local Binary Pattern (LBP) Optimization \\ for Feature Extraction
}
\author{ Zeinab Sedaghatjoo$^1$
\begin{footnote}{Corresponding author.\newline {\em  E-mail addresses:}
\newline z.sedaqatjoo@aut.ac.ir , zeinab.sedaghatjoo@gmail.com (Z. Sedaghatjoo).  \\
 h{\_}hosseinzadeh@aut.ac.ir  ,  hosseinzadeh@pgu.ac.ir (H. Hosseinzadeh). \\
b\_sadeghi\_b@alzahra.ac.ir (B. S. Bigham).
}$\vspace{.2cm} $
\end{footnote} 
, Hossein Hosseinzadeh$^1$, Bahram Sadeghi Bigham$^2$
\\
\small{\em 
 $^1$ Department of Mathematics, Persian Gulf University, Bushehr, Iran.} \\
\small{\em  
 $^2$ Department of Computer Science, Faculty of Mathematical Sciences, Alzahra University, Tehran, Iran.}
\vspace{-1mm}} \maketitle
\vspace{.9cm}

\begin{abstract}
The rapid growth of image data has led to the development of advanced image processing and computer vision techniques, which are crucial in various applications such as image classification, image segmentation, and pattern recognition. Texture is an important feature that has been widely used in many image processing tasks. Therefore, analyzing and understanding texture plays a pivotal role in image analysis and understanding.Local binary pattern (LBP) is a powerful operator that describes the local texture features of images. This paper provides a novel mathematical representation of the LBP by separating the operator into three matrices, two of which are always fixed and do not depend on the input data. These fixed matrices are analyzed in depth, and a new algorithm is proposed to optimize them for improved classification performance. The optimization process is based on the singular value decomposition (SVD) algorithm. As a result, the authors present optimal LBPs that effectively describe the texture of human face images.  Several experiment results presented in this paper convincingly verify the efficiency and superiority of the optimized LBPs for face detection and facial expression recognition tasks.
\vspace{.5cm}\\
\textbf{{\em Keywords}}: 
Local binary pattern,
Feature extraction,
Singular value decomposition,
Classification. \\
\end{abstract}

MSC 2020:  65D12, 65N38, 32A55.
\section{Introduction}
The Local Binary Pattern (LBP) is a texture descriptor widely used in computer vision for image classification. Initially introduced by Ojala et al. \cite{ojala2002multiresolution}, LBP has become popular for its ability to extract texture features effectively while maintaining computational simplicity. The core concept of the LBP involves comparing each pixel of an image with its neighboring pixels to encode the local texture information into binary patterns. The resulting binary values obtained from the comparisons are concatenated sequentially in a clockwise order, forming an 8-digit binary number for each pixel. A histogram is computed over the entire pixels, capturing the frequency of different LBP patterns. The histogram generated from LBP serves as a feature vector for the image and can be directly utilized for classification purposes.

Variations of LBP have been developed to enhance its performance and address specific challenges in different applications \cite{zhou2018pose, kaplan2020improved, tekin2020new, vu2022masked}. A comparative study on the LBP based on face recognition is prsented in \cite{yang2013comparative}. The enhanced local binary pattern histogram (ELBPH) is analysed there and it is highlighted that it enhances the LBP results efficiently. In ELBPH an image is divided into some regions (sub-images), then a regional LBP histogram is extracted from each region and finally concatenates all the regional histograms into a single global histogram as a feature. Figure \ref{fig32} shows this process, graphically. In \cite{karis2016local} varients of LBP method and modifications are studied and analysed in object detection. As a result, the LBP method is too sensitive in details of the image retrieve by the system. In term of the object detection, the same objects can result different LBP values and the system might be confuse to determine the object. So, the LBP needs a refinement for object detection. 
Recently, a review is provided in \cite{khaleefah2020review} on the LBP and its modifications. The paper focuses the current trends for using, modifying and adapting the LBP in the image processing for feature extraction. And a robust LBP is proposed in \cite{karanwal2022robust} for face recognition in different challenges.  Two new descriptors are launched there to overcome noisy thresholding function in the LBP. So, the center pixel of a 3×3 patch is replaced by the mean of the patch. A scale and pattern adaptive local binary pattern (SPALBP) is proposed in \cite{hu2024scale} to overcome rotation changes or noise corruptions. Also, several papers are devoted to the use of the LBP for object detection \cite{baskar2023vision, lan2024neighbourhood, luo2023texture, karanwal2023triangle}.


In the literature, it has been shown that  8-digit binary numbers where the zeros and ones are in two disjoint regions are more useful for certain applications \cite{ojala2002multiresolution, khaleefah2020review}. Then the other binary numbers contain some noise. These useful patterns are named uniform LBPs \cite{khaleefah2020review}. A binary pattern contains at most two transitions from 0 to 1 or 1 to 0 in if it is uniform. For example, 00011110 (has two transitions) is a uniform pattern, but 01011100  (has 4 transitions) is not. Mathematically, the uniform patterns primarily capture directional derivatives, which can highlight the direction of the light in the image. Therefore, using the uniform LBP binary numbers appears to be beneficial.
In this study, we explain the LBP feature extraction process from a mathematical perspective and propose a new algorithm to find optimal LBP features for classification tasks. This algorithm leverages the analysis of the singular value decomposition (SVD) of the LBP matrix, introduced in here.
The experimental results presented in this paper clearly demonstrate the effectiveness of the optimal LBP features in improving classification performance compared to the standard LBP features. 
The rest of the paper is organized as follows:
\begin{itemize}
\item
In Section \ref{Sec2}, the LBP algorithm is introduced and the process of creating LBP histogram is analyzed. Then, the main mathematical properties of the histogram representation is described there.
\item
In Section \ref{Sec3}, the LBP is explained in matrix form, and it is shown that the LBP features of an image can be extracted by evaluating three matrices; LBP  Matrix, Tiling matrix, and Histogram matrix. These matrices are denoted by $E, L$ and $H$, respectively.
\item
Section \ref{Sec4} is dedicated to highlighting the motivation and novelty of the paper. Then, the role of matrices $L$ and $H$ in the feature extraction is described there.
\item
In Section \ref{Sec5}, some mathematical studies are presented to find optimal values of matrices $L$ and $H$ for the problems. This optimization is done by the use of SVD algorithm.
\item
In Section \ref{Sec6}, a novel algorithm is proposed to identify optimal LBP values for the feature extraction.
\item
In Section \ref{Sec7}, results of several numerical experiments are presented, which highlight the efficiency of the proposed algorithm in tasks such as face detection and facial expression recognition.
\item
Finally, the paper concludes with a concise conclusion in Section \ref{Sec8}.
\end{itemize}

\section{LBP}\label{Sec2}
Local Binary Pattern (LBP) is a texture descriptor commonly used in computer vision. It operates on images by assigning each pixel a binary code based on comparisons with its neighbouring pixels \cite{ojala2002multiresolution}. To apply LBP, an image is divided into several local regions, and the LBP features are extracted from each region, sequentially. These LBP features are then concatenated to form a global description of the image. 
\begin{figure}[h!]
\centerline{
\includegraphics[scale=0.55]{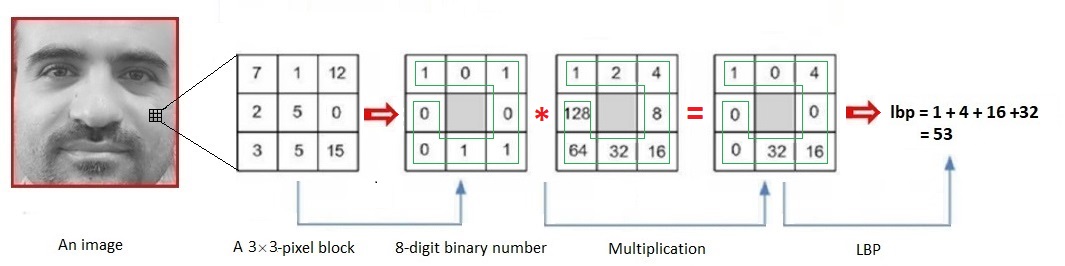}\hspace{.5cm}
}
\caption{LBP values computation process. Each $3\times 3$ pixel block in the image is encoded to a LBP value.}
\label{im11}
\end{figure}
As shown in Figure \ref{im11},  LBP typically operates on $3 \times 3$-pixel blocks, where the difference between the central pixel and its eight adjacent pixels is extracted as the local texture feature representation.  
This difference is captured by the function $S$, defined as follows:
\begin{equation}\label{s}
S(p,c)=
\left \{
\begin{array}{ll}
1, ~~~~~~~~~~~ \hbox{if} ~~ g(p) \geq g(c),\\
\\
0, ~~~~~~~~~~~ \hbox{if} ~~ g(p) < g(c).
\end{array}\right.
\end{equation}
where $g(p)$ and $g(c)$ are the values of the neighbouring pixel $p$ and the central pixel $c$, respectively. Then, if we label the neighbouring pixels as $p_1, p_2, ..., p_8$
, then an 8-bit local binary pattern centered at $c$ will be encoded as follows:
\begin{equation}\label{lbp}
lbp(c)=\sum_{i=1}^{8} 2^{i-1} S(p_i,c).
\end{equation}
After obtaining the $lbp$ values for all pixels in each local region of the image, histograms are generated to capture the occurrence frequency  of  different LBP features within the region. This histogram effectively summarizes the local texture patterns present in the region. Generally, there are $2^8=256$ patterns in the histogram, but it is often summarized into $2^3=8$ to $2^4=16$ patterns for simplicity \cite{ojala2002multiresolution}. These histograms are combined into a unified histogram, as shown in Figure \ref{fig32}, representing the overall texture characteristics of the image. 
\begin{figure}[h!]
\centerline{
\includegraphics[scale=0.6]{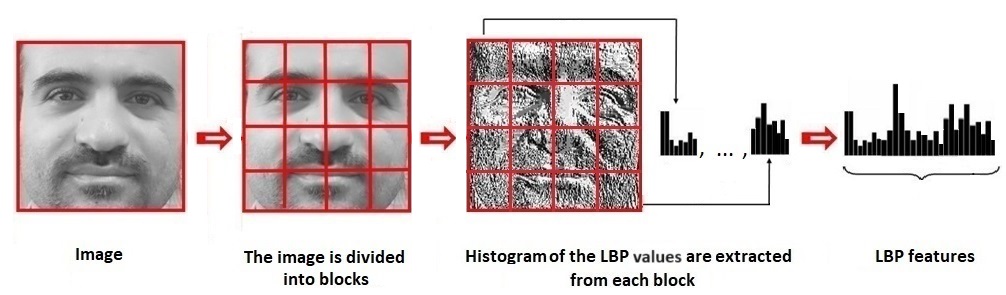}\hspace{.5cm}
}
\caption{LBP feature extraction process. The image is divided into 16 sub-regions and LBP values are extracted from each region. Then the histogram is applied on the LBP values to extract $8$ features for each sub-region. Then, finally, $128=16 \times 8$ features are exploited from the image as LBP features.}
\label{fig32}
\end{figure}

\subsection{Inverse of $lbp$}
The function presented in Equation \eqref{lbp}, $lbp$, transforms an 8-bit binary number to interval $[0,255]$. This transformation is both one to one and onto, meaning each 8-bit binary number is uniquely linked to only a single integer number within the interval, and vice versa. Consequently, there exists an inverse function, $lbp^{-1}$, which maps integers in $[0,255]$ back to the 8-bit binary numbers. Algorithm \ref{alg1} represents the inverse function. Let $a=[S(p_1,c), S(p_2,c), ..., S(p_8,c)]$ represents an 8-bit binary number. It is clear that $lbp^{-1}$ maps $0$ to $[0, 0, ...,0]$ and $255$ to $[1, 1, ..., 1]$.  And the other numbers between $0$ and $255$ are mapped to their corresponding 8-bit binary numbers as shown in  Algorithm \ref{alg1}. 

\begin{algorithm}
\caption{: Function $lbp^{-1}$.}\label{alg1}
\begin{algorithmic}[1]
\State \textbf{Input:} $lbp \in [0, 255]$.
\State Initialize vector $a$ as $a = [0, 0, 0, 0, 0, 0, 0, 0]$.
\For{$i = 7$ to $0$}
    \If{$lbp \geq 2^{i}$}
        \State Set $a[i] = 1$.
        \State Update $lbp$ value as $lbp \leftarrow lbp - 2^{i}$.
    \EndIf
\EndFor
\State \textbf{Output:} return vector $a$.
\end{algorithmic}
\end{algorithm}

From Algorithm \ref{alg1}, the eighth bit of the 8-bit binary number is set to $0$ if $lbp < 2^7$ and to  $1$ if $lbp \geq 2^7$. 
When the eighth bit is $0$, the seventh bit is set to $0$ if $lbp<2^6$ and to $1$ if $lbp \geq 2^6$. And when the eighth bit is $1$, the seventh bit is set to $0$ if $lbp-2^7<2^6$ and to $1$ if $lbp-2^7 \geq 2^6$. 
This rule can be extended to the other bits of the 8-bit binary number and find a simple and intuitive transformation from $lbp$ values to 8-bit binary numbers. Figure \ref{fig1} illustrates the inverse transformation for evaluating the four last bits of the 8-bit binary numbers, graphically. This process can be extended to evaluate the four first bits of 8-bit binary numbers as well. This evaluation can notably affect the effectiveness of the LBP feature extraction, particularly when the histogram is used to reduce the number of the features. The next subsection delves into this subject in detail.

\subsection{Histogram and its restriction }\label{Sec31} 
Considering that extracting $256$ features from LBP is not desirable, they are typically summarized to $16$ or fewer features using histograms \cite{ojala2002multiresolution, martolia2020modified} for dimension reduction.  
The histogram clusters LBP features by dividing the range of the $lbp$ values into equal sub-intervals, thereby reducing the number of features.  In Figure \ref{fig1}, the $lbp$ values are divided into  $m=2, 4, 8$ and $16$ parts from top to bottom. In this figure, only the eighth cell of vector $a$ is meaningful when the $lbp$ values are divided into $m=2$ parts by the histogram. Also, two end cells of vector $a$ are evaluated when the histogram divides the $lbp$ values into $m=4$ parts. This process can be expanded to include $i$ end cells of vector $a$ when the histogram divides the  $lbp$ values into $m=2^i$ equal parts for $i=1, 2, ...,8$. Consequently, when the histogram is applied, the significance of the cells diminishes from the end to the beginning. Specifically, the first four cells are disregarded when the histogram divides the  $lbp$ values into $m=16$ or less parts. 

This suggests that using 4-bit binary numbers, which encompass $4$ cells, is sufficient when applying the histogram process for feature reduction in the LBP descriptor. Reference \cite{martolia2020modified} empirically confirms this observation. This can be considered as a limitation of the standard LBP approach in dimension reduction, as it does not fully utilize the potential of the LBP descriptor. Therefore, the mathematical analysis in the next sections aims to address this issue, proposing an efficient algorithm to identify optimal LBP features for dimension reduction. The proposed algorithm overcomes histogram drawbacks and enhances the LBP feature extraction, efficiently. 

\begin{figure}[h!]
\centerline{
\includegraphics[scale=.5]{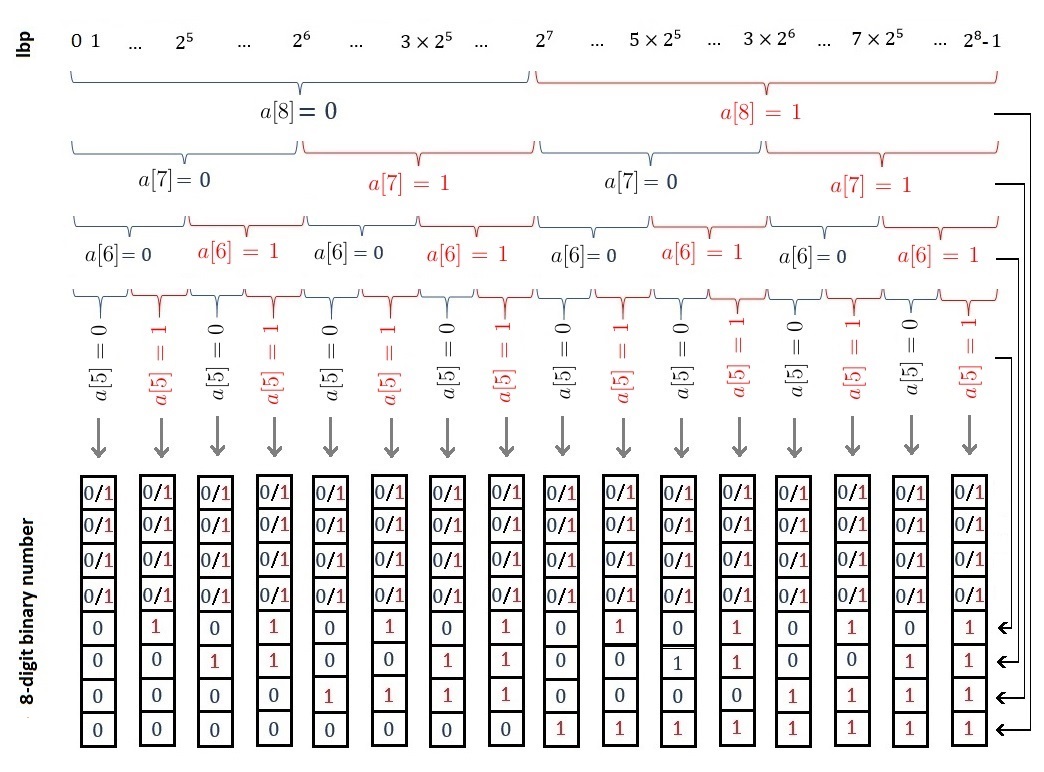}\hspace{0.5cm}
}
\caption{Function $lbp^{-1}$ transforms $lbp$ values to 8-bit binary numbers. One can see $i$ end cells of the numbers are only important when the histogram divides the $lbp$ values into $2^i$ equal sub-intervals for $i=1, 2, ...,8$.  }
\label{fig1}
\end{figure}

\section{Matrix representation of LBP feature extraction} \label{Sec3}
In this section, we analyze the LBP process and represent it in vector form. Vectorizing the LBP process involves breaking it down into three main parts: LBP  Matrix, $E$, Tiling matrix, $T$, and Histogram matrix, $H$. In fact, we construct three matrices $T, E$ and $H$ such that elements of matrix 
\be\label{F}
F = T \, E\, H ,
\ee
are the LBP features extracted from an image. These matrices are represented in the next subsections.

\subsection{LBP  Matrix}\label{Sec31}
In this subsection, our objective is to build a matrix based on the $lbp$ values defined in \eqref{lbp}. Assuming the image size is $n \times n$, we use a $3 \times 3$ local neighbourhood around each pixel of the image. As illustrated in Figure \ref{figVec}, we introduce an extension vector for each $lbp$ value. 
The extension vector is of length $256$  where all of its cells are zero except its $lbp$-th position, which it is evaluated equal to $1$. This vector is denoted by $\mathbf{e}$. Therefore, matrix $E$, which contains the extended vectors respect to the pixels in an image, is defined as follows:
\bes
E=
\begin{bmatrix}
        &   \mathbf{e}_1     &          \\
        &  \mathbf{e}_2     &           \\
        &   \vdots              &       \\   
        &   \mathbf{e}_n    &           \\
\end{bmatrix}_{n^2 \times 256} ,
\ees
where $\mathbf{e}_i$ is vector $\mathbf{e}$ at $i$-th pixel of the image.  Then $E$ is a matrix of size $n^2 \times 256$ when the size of the image is $n\times n$. Note that with an increased image size (e.g., through zero-padding), the size of matrix $E$ is exactly $n^2 \times 256$, not $(n-1)^2 \times 256$.
\begin{figure}[h!]
\centerline{
\includegraphics[scale=.25]{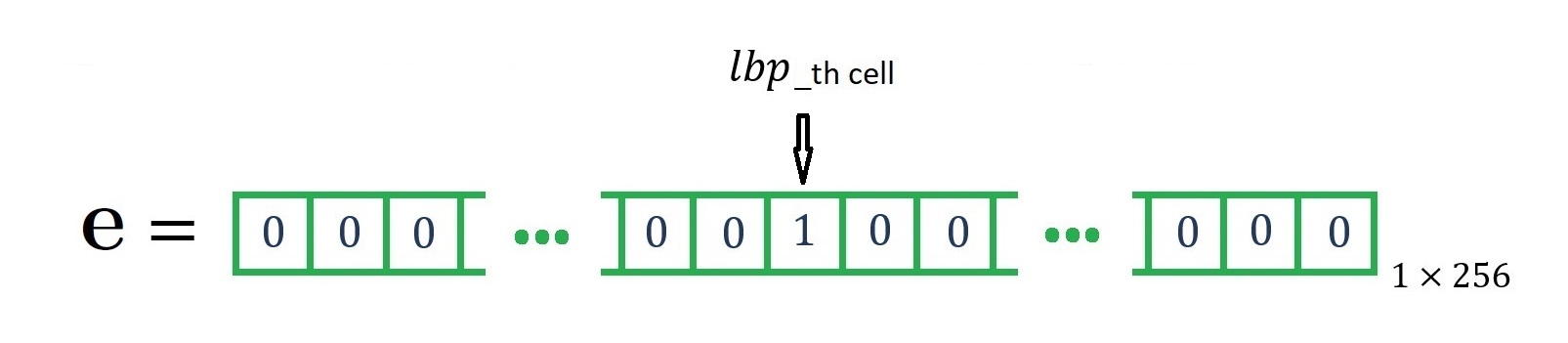}\hspace{0.5cm}
}
\caption{The extension vector for a LBP value. The vector takes 1 at $lbp$-th cell and 0, otherwise. }
\label{figVec}
\end{figure}

\subsection{Tiling matrix}\label{Sec32}
Tiling refers to the process of dividing an image into several local regions. Since finding global features may be less useful, we often divide an image into smaller regions and compute LBP features for each region separately. This approach enhances the efficiency of the LBP method. The main image is typically divided into $4, 9, 16$ and $25$ equal sub-regions in the LBP approach \cite{ojala2002multiresolution}. It is important to note that these 
sub-regions do not overlap. Suppose we want to divide the image into  $l$ sub-regions, with each sub-region denoted
as $\Omega_i$ where $i=1,2,..,l$. For example, as shown in Figure \ref{fig32}, the image is divided into 16 sub-regions, which
 can be numbered from left to right in the first row, and then from left to right in the second row, and so on.  The tiling matrix,  $T$, can be represented as:   
\bes 
T=
\begin{bmatrix}
        &   \mathbf{t}_1     &          \\
        &  \mathbf{t}_2     &           \\
        &   \vdots              &       \\   
        &   \mathbf{t}_l    &           \\
\end{bmatrix}_{l \times n^2} ,
\ees
where $j$-th element of vector  $\mathbf{t}_i$ is evaluated as
\begin{equation}\label{s}
\mathbf{t}_i(j)=
\left \{
\begin{array}{ll}
1, ~~~~~~~~~~~ \hbox{if} ~~ x_{j}\in\Omega_i,\\
\\
0, ~~~~~~~~~~~ o.w.
\end{array}\right.
\end{equation}
for $j=1,2,...,n^2$ when $x_j$ is the location of $j$-th pixel in the image.

\subsection{Histogram matrix}\label{Sec33}
Now, let's introduce the histogram matrix, $H$. Assuming we want to reduce the extracted features to $m$ for each sub-regions by the histogram approach, we divide the interval $[0, 255]$ into $m$ equally sized subintervals $V_i$ for $i=1, 2, ..., m$. The matrix $H$ is then defined as:
\bes
H=
\begin{bmatrix}
           \mathbf{h}_1^T, \mathbf{h}_2^T, \hdots \mathbf{h}_m ^T
\end{bmatrix}_{256 \times m} ,
\ees
where $T$ denotes the transpose operator and $j$-th cell of vector $\mathbf{h}_i$ equals $1$ when $j$ is in sub-interval $V_i$ when interval $[0,255]$ is divided to $m$ sub-intervals $V_1, V_2, ..., V_m$, and it is 0 otherwise, i.e. 
\begin{equation}\label{s}
\mathbf{h}_i(j)=
\left \{
\begin{array}{ll}
1, ~~~~~~~~~~~ \hbox{if} ~~ j\in V_i,\\
\\
0, ~~~~~~~~~~~ o.w.
\end{array}\right.
\end{equation}
\\

Therefore, we introduced three matrices $T, E$ and $H$. In the standard LBP context, matrix $T$ sums the number of repeated $lbp$ values for the pixel blocks located in specific regions, while matrix $H$ summarizes patterns by grouping them and performing equalization within each group, akin to a histogram formula. Thus, matrices $T$ and $H$ are fixed based on the desired number of features we wish to extract, whereas matrix $E$ depends on the image.
 When extracting standard LBP features from an image, assuming matrices  $T$ and $H$ are known, it suffices to compute matrix $E$. Then, by computing  $F = T \, E\, H$, the features are extracted from the image.  
 For example, if we divide an image of size  $128\times 128$ into $l=16$ sub-regions and aim to extract $m=8$ LBP features from each sub-region by the histogram, the matrices  have the following dimensions: 
 $T$ is  $16\times 128^2$, 
$E$ is $128^2\times 256$,
$H$ is $256\times 8$. 
Consequently, $F = T \, E\, H$ results in a matrix of size $16\times8$, where each row represents the features extracted from the corresponding sub-region. Then, totally $128=16\times 8$ features are extracted from the image as is shown in Figure \ref{fig32}.

\section{Motivation and novelty of the paper}\label{Sec4}
This study is motivated by the need to overcome the limitations of the standard LBP and improve the efficiency in identifying optimal features for dimension reduction and classification applications. 
By delving into the foundational principles and adopting the matrix representation of the LBP, as discussed in the previous section, we aim to optimize the LBP feature extraction. 
Earlier, we highlighted the inherent ambiguities in using histograms for LBP feature extraction. 
It is important to note that  the standard LBP matrix process employs two fixed transformation matrices,  $T$ and $H$, for feature extraction, which are not tailored to specific datasets or problems. 
These matrices are designed to extract a predetermined set of features irrespective of the dataset characteristics. 
Our primary objective is to develop two specialized transformation matrices that will significantly enhance the extraction of relevant features tailored to our specific problem domain. 
Moreover, we aim to enhance the effectiveness of feature extraction and mitigate the ambiguities associated with traditional histogram methods. 
This innovative approach ensures that our method remains robust and effective across diverse datasets, optimizes consistency and reliability in feature representation. 
To achieve these customized matrices, we propose to extend the matrix-based LBP process. 
While the matrix relation $F = T \, E\, H$ typically applies to individual images, we acknowledge the feasibility of deriving distinct transformation matrices for each image.
However, to maintain consistency with the standard LBP process, our aim is to keep these transformation matrices constant across all images in the dataset. To achieve this, we extend the relation to a more generalized form:
\begin{align}\label{AA}
\, \bar{F} \,  = L \, \bar{E} \, H , 
\end{align}
where $\bar{E}$ is a robust representation for the LBP matrices of the images in the dataset. Statistically, the mean of the LBP matrices obtained from the images in the dataset, i.e. ,  
\bes
\bar{E} = \frac{1}{N} \sum_{i=1:N} E_i ~,
\ees  
is a suitable representation. Here, $E_i$ is the LBP matrix of the $i$-th image, and $N$ is the total number of training images.
Note that, while $T$ and $H$ are fixed matrices for the images, we have $\bar{F}   = \frac{1}{N} \sum_{i=1}^{N} F_i$, where $F_i = T \,E_i \, H$. Then, in a mathematical sense, $\bar{F} $ is the mean of ${F}_1, F_2, ... , F_N $.
As mentioned before, in this paper, our primary focus is on optimizing the transformation matrices $T$ and $H$ not only for dimension reduction but also classification. We will explain the approach for two classes for simplicity, and subsequently, it can be extend it to multi-class classification using the One-vs-One (OvO) or One-vs-Rest (OvR) strategy\cite{bishop2006pattern}.

Let $\bar{E}_1$ and $\bar{E}_2$ denote the mean matrices respect to Class 1 and Class 2, respectively. Since the most effective features  are those that  distinguish between the classes, we can use the difference between these matrices,  i.e. ,
$
\bar{E}_{1,2}  = \bar{E}_2-\bar{E}_1,
$
for the classification. Accordingly, based on equation \eqref{AA} for feature extraction of $\bar{E}_{1,2} $, we have:
\bes
\, \bar{F}_{1,2} \,  = T \, \, \bar{E}_{1,2}  \,  \, H,
\ees 
where  $\bar{F}_{1,2}$ represents the LBP features corresponding to the difference between the classes. Our goal is to construct transformation matrices  $T$ and $H$  that transform  $\bar{E}_{1,2} $ into the final matrix $\bar{F}_{1,2}$, while preserving the most critical information distinguishing these two classes. In mathematical terms, the optimal $T$ and $H$ are those that map the matrix $\bar{E}_{1,2} $ to the largest possible matrix in terms of the rank and norm. Achieving this objective effectively involves employing the singular value decomposition algorithm, as elaborated in the following section. 

Hereafter, we focus on the classification problem because identifying optimal features, which lead to dimension reduction, can be considered a special case of classification by assuming $\bar{E}_2 = \bar{E}$ and $\bar{E}_1 = \bold{0}$ (where $\bold{0}$ is the zero matrix).

\section{Optimal transformation matrices }\label{Sec5}
In this section, we will utilize the SVD for optimizing the LBP transformation matrices, $T$ and $H$, introduced earlier for the classification. SVD is a powerful mathematical tool that decomposes a matrix into its constituent singular vectors and values. 
Since the SVD provides a decomposition that highlights the most significant components (singular vectors) of $\bar{E}_{1,2} $, it is particularly effective for this purpose. Thus, from SVD, let matrix $\bar{E}_{1,2}$  can be decomposed to three matrices, $U, \Sigma$ and $V$ as
\be\label{svd}
\bar{E}_{1,2}=U \Sigma V^T ,
\ee
where $\Sigma$ is a diagonal matrix containing the singular values in descending order. These singular values represent the magnitude of each corresponding component's contribution to the overall structure of the data. Matrices $U$ and $V$ are left and right singular vectors of $\bar{E}_{1,2}$ satisfy $V V^T=I$ and $U U^T=I$, where $I$ denotes the identity matrix \cite{strang2012linear,trefethen2022numerical}. Then we can find
\bes 
U=[\mbf{u}_1, \mbf{u}_2, ..., \mbf{u}_{n^2}]_{n^2\times n^2}~,~
\Sigma=
\begin{bmatrix}
       \sigma_1 & 0  &     0         &  \dots     & 0      \\
       0 			&  \sigma_2 &     0         &\dots       & 0     \\
       0           &     0         &  \sigma_3 &              &      \\
        \vdots   & \vdots     &                 & \ddots    &              \\   
       0           &    0          &             &            &       \\
\end{bmatrix}_{n^2 \times 256} ~,~
V=[\mbf{v}_1, \mbf{v}_2, ..., \mbf{v}_{256}]_{256\times 256}~,~
\ees
where $\mbf{u}_i$ and $\mbf{v}_i$ represent $i$-th column of matrices $U$ and $V$, respectively.  
Thanks to the SVD algorithm, the following theorem is valid.

\begin{Theorem}\label{th1}
For two normal column vectors $\mbf{u}$ and $\mbf{v}$ of length $n^2$ and $256$, respectively we have 
\bes
| \mbf{u}^T \, \bar{E}_{1,2} \, \mbf{v} | \leq | \mbf{u}_1^T \, \bar{E}_{1,2} \, \mbf{v}_1 | = \sigma_1,
\ees
\end{Theorem}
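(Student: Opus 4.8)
The plan is to reduce the bilinear form $\mathbf{u}^T \bar{E}_{1,2} \mathbf{v}$ to a weighted sum of the singular values and then bound it. First I would substitute the SVD decomposition $\bar{E}_{1,2} = U\Sigma V^T$ from \eqref{svd}, so that
\bes
\mathbf{u}^T \bar{E}_{1,2} \mathbf{v} = \mathbf{u}^T U \Sigma V^T \mathbf{v} = (\U^T\mathbf{u})^T \Sigma (V^T\mathbf{v}).
\ees
Writing $\mathbf{a} = U^T\mathbf{u}$ and $\mathbf{b} = V^T\mathbf{v}$, the orthogonality relations $UU^T = I$ and $VV^T = I$ (together with the fact that these are square orthogonal matrices, so also $U^TU = I$ and $V^TV = I$) guarantee $\|\mathbf{a}\| = \|\mathbf{u}\| = 1$ and $\|\mathbf{b}\| = \|\mathbf{v}\| = 1$, since orthogonal transformations preserve the Euclidean norm. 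So the problem becomes: bound $|\mathbf{a}^T \Sigma \mathbf{b}|$ over unit vectors $\mathbf{a} \in \mathbb{R}^{n^2}$, $\mathbf{b} \in \mathbb{R}^{256}$, where $\Sigma$ is the rectangular diagonal matrix of singular values in descending order.

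Next I would expand the diagonal form explicitly: $\mathbf{a}^T \Sigma \mathbf{b} = \sum_{i=1}^{256} \sigma_i a_i b_i$ (only the first $256$ entries of $\mathbf{a}$ participate, and $\sigma_i$ may be zero for $i$ beyond the rank). Then by the triangle inequality and $\sigma_1 \geq \sigma_i$ for all $i$,
\bes
\Big| \sum_{i=1}^{256} \sigma_i a_i b_i \Big| \leq \sum_{i=1}^{256} \sigma_i |a_i|\,|b_i| \leq \sigma_1 \sum_{i=1}^{256} |a_i|\,|b_i|.
\ees
A single application of the Cauchy–Schwarz inequality gives $\sum_i |a_i||b_i| \leq \|\mathbf{a}\|\,\|\mathbf{b}\| \leq 1$, which completes the upper bound $|\mathbf{u}^T \bar{E}_{1,2}\mathbf{v}| \leq \sigma_1$. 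For the equality claim, I would verify directly that choosing $\mathbf{u} = \mathbf{u}_1$ and $\mathbf{v} = \mathbf{v}_1$ yields $\mathbf{a} = e_1$ and $\mathbf{b} = e_1$ (the first standard basis vectors), so the sum collapses to $\sigma_1 \cdot 1 \cdot 1 = \sigma_1$, and since these are unit vectors the bound is attained.

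I do not anticipate a serious obstacle here; this is the standard variational characterization of the largest singular value, and every step is a one-line inequality. The only point requiring a little care is the bookkeeping with the rectangular shape of $\Sigma$ (size $n^2 \times 256$) and the fact that $U$ is $n^2 \times n^2$ while $V$ is $256 \times 256$ — one must be sure that "normal" means unit-norm and that the claim $U^TU = I$ follows from $UU^T = I$ for square matrices, so that $\|U^T\mathbf{u}\| = \|\mathbf{u}\|$ genuinely holds. A secondary subtlety is that the statement says "two normal column vectors," which I interpret as unit-norm vectors; if instead the intended hypothesis were merely $\|\mathbf{u}\| \le 1$ and $\|\mathbf{v}\| \le 1$ the same argument goes through unchanged, and the equality case still requires the unit-norm choice $\mathbf{u}_1, \mathbf{v}_1$.
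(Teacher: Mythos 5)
Your proof is correct and follows essentially the same route as the paper: both arguments use the orthogonality of $U$ and $V$ to reduce the bilinear form to the weighted sum $\sum_i \sigma_i \alpha_i \beta_i$ over unit coefficient vectors and then bound it by $\sigma_1$. If anything, your version is slightly more complete, since you explicitly carry out the final estimate via the triangle inequality and Cauchy--Schwarz, whereas the paper simply asserts that the sum is maximized at $\alpha_1=\beta_1=1$.
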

\begin{proof}
We know that columns of matrix $U$ are independent vectors that span the linear space $\mathbb{R}^{n^2}$ \cite{strang2012linear,trefethen2022numerical}. Therefore, vector $\mbf{u}$ can be expanded by the columns of $U$ as :
\be\label{u}
\mbf{u}=\sum_{i=1:n^2} \alpha_{i} \mbf{u}_i= U \boldsymbol{\alpha},
\ee
where $\mbf{u}_i$ denotes the $i$-th column of $U$ and $\alpha_{i}$ are the expansion coefficients, with $\sum_{i=1:n^2} \alpha_{i}^2=1$.
Let $\boldsymbol{\alpha}$ be a column vector evaluated as  $\boldsymbol{\alpha}=[\alpha_1, \alpha_2, ..., \alpha_n]^T$. Similarly, vector $\mbf{v}$ can be expanded by the columns of $V$ as 
\be\label{v}
\mbf{v}=\sum_{i=1:256} \beta_{i} \mbf{v}_i= V \boldsymbol{\beta},
\ee
where $\mbf{v}_i$ denotes the $i$-th column of $V$ and $\beta_{i}$ are the expansion coefficients, with
$\sum_{i=1:256} \beta_{i}^2=1$. Let $\boldsymbol{\beta}$ be a column vector evaluated as  $\boldsymbol{\beta}=[\beta_1, \beta_2, ..., \beta_{256}]^T$.  Now from SVD \eqref{svd} and equations \eqref{u} and \eqref{v} we have
\bes
\mbf{u}^T \, \bar{E}_{1,2} \, \mbf{v} = \boldsymbol{\alpha}^T U^T (U \Sigma V^T) V \boldsymbol{\beta} =\boldsymbol{\alpha}^T \Sigma \boldsymbol{\beta} = \sum_{i=1:256} \sigma_i \alpha_i \beta_i , 
\ees 
which it is maximized for $\alpha_1=\beta_1=1$. Consequently, the theorem is valid. 
\end{proof}
Then the following corollary is obtained
\begin{cor}\label{o1}
Thanks to the SVD and Theorem \ref{th1}, maximum value of $| \, \bar{F}_{1,2} \,  |$ is obtained for 
\be\label{oS}
T=\mbf{u}_1^T~,~~ H=\mbf{v}_1~,
\ee
when $l=m=1$. 
\end{cor}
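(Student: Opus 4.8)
The plan is to derive the corollary directly from Theorem~\ref{th1} by observing that, when $l=m=1$, the tiling matrix $T$ degenerates to a single row vector $\mbf{u}^T$ of length $n^2$ and the histogram matrix $H$ degenerates to a single column vector $\mbf{v}$ of length $256$, so that $\bar{F}_{1,2} = T\,\bar{E}_{1,2}\,H$ is the scalar $\mbf{u}^T\,\bar{E}_{1,2}\,\mbf{v}$. First I would note that in this study the rows of $T$ and the columns of $H$ are taken to be normalized (unit-norm) vectors, since a common scaling does not change which features are selected and only rescales $\bar{F}_{1,2}$; under that normalization $\mbf{u}$ and $\mbf{v}$ are exactly the ``normal column vectors'' to which Theorem~\ref{th1} applies. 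Then Theorem~\ref{th1} gives
\bes
|\bar{F}_{1,2}| = |\mbf{u}^T\,\bar{E}_{1,2}\,\mbf{v}| \leq |\mbf{u}_1^T\,\bar{E}_{1,2}\,\mbf{v}_1| = \sigma_1 ,
\ees
so the upper bound $\sigma_1$ holds for every admissible choice of $T$ and $H$.

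Second, I would verify that this bound is attained, which is what makes $\sigma_1$ the \emph{maximum} rather than merely an upper bound: choosing $T=\mbf{u}_1^T$ and $H=\mbf{v}_1$ yields $\bar{F}_{1,2}=\mbf{u}_1^T\,\bar{E}_{1,2}\,\mbf{v}_1=\sigma_1$ by the equality already established in the proof of Theorem~\ref{th1} (equivalently, from $\bar{E}_{1,2}=U\Sigma V^T$ together with $U^TU=I$, $V^TV=I$, which forces $\mbf{u}_1^T\,\bar{E}_{1,2}\,\mbf{v}_1$ to pick out the $(1,1)$ entry $\sigma_1$ of $\Sigma$). Since $\mbf{u}_1$ and $\mbf{v}_1$ are unit vectors (columns of orthogonal matrices), this choice is admissible, so the supremum is achieved and equals $\sigma_1$. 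Combining the two steps gives $\max|\bar{F}_{1,2}| = \sigma_1$, attained at $T=\mbf{u}_1^T$, $H=\mbf{v}_1$, which is exactly \eqref{oS}.

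The argument is essentially immediate once Theorem~\ref{th1} is in hand, so there is no serious technical obstacle; the one point that deserves care is the modeling assumption, namely justifying that it is legitimate to restrict $T$ and $H$ to unit-norm vectors when $l=m=1$. I would address this by remarking that in the feature-extraction pipeline the absolute scale of a single extracted feature is irrelevant (it can be absorbed into any downstream normalization or classifier), so the meaningful optimization is over the \emph{direction} of the transformation vectors, and on the unit sphere Theorem~\ref{th1} applies verbatim. A secondary, purely cosmetic caveat is that the maximizer is unique only up to sign (replacing $\mbf{u}_1$ by $-\mbf{u}_1$, or $\mbf{v}_1$ by $-\mbf{v}_1$, leaves $|\bar{F}_{1,2}|$ unchanged) and up to the usual non-uniqueness of singular vectors when $\sigma_1$ is a repeated singular value; the statement \eqref{oS} should be read as exhibiting one optimal choice. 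With these remarks the corollary follows.
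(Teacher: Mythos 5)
Your proof is correct and takes essentially the same route as the paper, which derives the corollary directly from Theorem~\ref{th1} (the paper gives no separate argument beyond invoking that theorem). Your added remarks on the unit-norm normalization of $T$ and $H$ and on the sign/degeneracy ambiguity of the maximizer are sensible clarifications but do not change the approach.
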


Corollary \ref{o1} denotes the optimal values of matrices $T$ and $H$ that lead to the maximum value of the distance between $\bar{E}_1$ and $\bar{E}_2$ when a linear combination of the LBP features is extracted from the LBP matrix. This fact can be extended to higher dimension and find more vectors distinguish two classes more efficient. The next theorem states this fact.
\begin{Theorem}
Let $T$ and $H$ are two matrices with orthonormal rows and columns, respectively. Then 
\bes
\| T \, \bar{E}_{1,2} \, H \| \leq \| \tilde{U}^T \, \bar{E}_{1,2} \, \tilde{V} \| = \sum_{i=1:\min\{l,m\}} \sigma_i,
\ees
for $l,m \leq 256$, where $\tilde{U}$ is a matrix obtained from the first $l$ columns of matrix $U$, and $\tilde{V}$ is a matrix obtained from the first $m$ columns of matrix $V$, evaluated as 
\be\label{tU}
\tilde{U}=[\mbf{u}_1, \mbf{u}_2, ..., \mbf{u}_l]~,~ \tilde{V}=[\mbf{v}_1, \mbf{v}_2, ..., \mbf{v}_m],
\ee
\end{Theorem}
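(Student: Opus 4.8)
\emph{Proof strategy.} The plan is to read this as a Ky Fan--type trace inequality and to prove it via the variational (dual) characterisation of the nuclear norm, with $\|\cdot\|$ here understood as the nuclear norm $\|M\|=\sum_j\sigma_j(M)$ --- the norm under which the claimed equality actually holds. The right-hand equality itself is immediate: since $\tilde U$ collects the first $l$ columns of the orthogonal matrix $U$, we have $\tilde U^T U=[\,I_l\mid 0\,]$, and likewise $V^T\tilde V=[\,I_m\mid 0\,]^T$; inserting the SVD $\bar E_{1,2}=U\Sigma V^T$ from \eqref{svd} gives $\tilde U^T\bar E_{1,2}\tilde V=[\,I_l\mid 0\,]\,\Sigma\,[\,I_m\mid 0\,]^T$, the leading $l\times m$ block of $\Sigma$, namely $\mathrm{diag}(\sigma_1,\dots,\sigma_{\min\{l,m\}})$ up to zero padding, whose nuclear norm is $\sum_{i=1}^{\min\{l,m\}}\sigma_i$.

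For the inequality, set $k=\min\{l,m\}$ and use $\|M\|=\max\{\mathrm{tr}(X^TM):\|X\|_2\le 1\}$ with $\|\cdot\|_2$ the spectral norm. Pick $X$ (of size $l\times m$, $\|X\|_2\le 1$) attaining the maximum for $M=T\bar E_{1,2}H$; by cyclic invariance of the trace, $\|T\bar E_{1,2}H\|=\mathrm{tr}(X^TT\bar E_{1,2}H)=\mathrm{tr}(Y^T\bar E_{1,2})$ where $Y:=T^TXH^T$ has the same shape as $\bar E_{1,2}$. Because $T$ has orthonormal rows, $T^T$ is an isometry, and because $H$ has orthonormal columns, right multiplication by $H^T$ is non-expansive, so $\|Y\|_2\le\|X\|_2\le 1$; also $\mathrm{rank}(Y)\le\mathrm{rank}(X)\le k$. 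It thus suffices to prove that every $Y$ with $\|Y\|_2\le 1$ and $\mathrm{rank}(Y)\le k$ satisfies $\mathrm{tr}(Y^T\bar E_{1,2})\le\sum_{i=1}^k\sigma_i$. Substituting the SVD and putting $Z:=U^TYV$ (still $\|Z\|_2\le 1$, $\mathrm{rank}(Z)\le k$) gives $\mathrm{tr}(Y^T\bar E_{1,2})=\mathrm{tr}(Z^T\Sigma)=\sum_i\sigma_iZ_{ii}\le\sum_i\sigma_i|Z_{ii}|$. With $d_i:=|Z_{ii}|$ one has $d_i\le\|Z\|_2\le 1$ and $\sum_i d_i=\mathrm{tr}(DZ)\le\|D\|_2\|Z\|_*=\|Z\|_*\le k\|Z\|_2\le k$, where $D$ is the diagonal sign matrix $D_{ii}=\mathrm{sign}(Z_{ii})$; a short rearrangement ($\sigma_i-\sigma_k\ge 0$ for $i\le k$, $\sigma_i\le\sigma_k$ for $i>k$, total mass at most $k$) then yields $\sum_i\sigma_i d_i\le\sum_{i=1}^k\sigma_i$, completing the argument.

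The step I expect to be the main obstacle is the last one: the Ky Fan maximum principle, i.e. the majorisation bound $\sum_i\sigma_i d_i\le\sum_{i=1}^k\sigma_i$ under $0\le d_i\le 1$ and $\sum_i d_i\le k$. This is precisely where the rank bound $\mathrm{rank}(Y)\le k$ is used essentially; without it the estimate degrades to the crude $\sigma_1\|Y\|$. A secondary, purely bookkeeping concern is handling the rectangular shapes ($Z$ and $\Sigma$ are $n^2\times 256$, the diagonal index runs to $256$, and $l,m\le 256$) and justifying $\sum_i|Z_{ii}|\le\|Z\|_*$. An alternative more in the spirit of Theorem \ref{th1} would be to strip off $\sigma_1$ via that theorem and iterate on the orthogonal complements of $\mbf u_1$ and $\mbf v_1$, but after one step the reduced factors are no longer exactly of the form \eqref{tU}, so that induction is fiddlier, and I expect the dual-norm route above to be the cleaner writeup.
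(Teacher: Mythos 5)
Your proof is correct, and it takes a genuinely different route from the paper's. The paper disposes of this theorem in a single sentence---``by induction on $l$ and $m$ and using the SVD algorithm alongside Theorem \ref{th1}''---without carrying out the induction; as you yourself note, that induction is delicate because after peeling off $\sigma_1$ the reduced factors no longer have the form \eqref{tU}, so the paper's sketch leaves the substantive work undone. Your dual-norm argument is self-contained and rigorous: you correctly observe that $\|\cdot\|$ must be read as the nuclear norm for the stated equality $\|\tilde{U}^T\bar{E}_{1,2}\tilde{V}\|=\sum_{i=1}^{\min\{l,m\}}\sigma_i$ to hold at all (under the Frobenius norm the right side would be $(\sum_i\sigma_i^2)^{1/2}$, under the spectral norm it would be $\sigma_1$), and you reduce the inequality, via the variational characterization $\|M\|_*=\max_{\|X\|_2\le 1}\mathrm{tr}(X^TM)$ together with the isometry/contraction properties of $T^T$ and $H^T$ and the rank bound $\mathrm{rank}(Y)\le k$, to the majorization estimate $\sum_i\sigma_i d_i\le\sum_{i=1}^{k}\sigma_i$ for $0\le d_i\le 1$, $\sum_i d_i\le k$, which your rearrangement $\sum_i\sigma_i d_i=\sum_i(\sigma_i-\sigma_k)d_i+\sigma_k\sum_i d_i$ handles cleanly. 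What your approach buys is an actual proof, plus a needed clarification of which norm the theorem asserts; what the paper's (unexecuted) approach would buy is stylistic continuity with Theorem \ref{th1}. In a final writeup you should justify the one standard fact you lean on, $\sum_i|Z_{ii}|\le\|Z\|_*$ (weak majorization of the diagonal by the singular values), and note the rectangular bookkeeping you already flag; neither is a gap in the argument.
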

\begin{proof}
By induction on $l$ and $m$ and using the SVD algorithm alongside Theorem \ref{th1}, the theorem can be demonstrated.
\end{proof}

Note that, since matrices $\tilde{U}^T$  and $\tilde{V}$ are orthonormal, using them as transformation matrices $T$ and $H$ results in a diagonal matrix $\, \bar{F}_{1,2} \, $. Therefore, diagonal elements of matrix $\, \bar{F}_{1,2} \,$ can be applied to extract important features from $\, \bar{E}_{1,2} \,$ reflect the differences between the classes.
Consequently, the following corollary is valid. 
\begin{cor}
The first $i$  diagonal elements of matrix $ \, \bar{F}_{1,2} \, =\tilde{U}^T \, \bar{E}_{1,2} \, \tilde{V}$ can be supposed as more efficient features highlighting the differences between matrices $\bar{E}_1$ and $\bar{E}_2$ when $l=m \leq 256$. 
\end{cor}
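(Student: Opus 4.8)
The plan is to reduce this corollary to a short computation with the singular value decomposition and then to interpret the outcome through Theorem~\ref{th1} and the preceding theorem. The essential point is that the choice $T=\tilde{U}^T$, $H=\tilde{V}$ from the preceding theorem does not merely \emph{bound} $\|T\,\bar{E}_{1,2}\,H\|$: it actually yields a \emph{diagonal} matrix $\bar{F}_{1,2}=\tilde{U}^T\,\bar{E}_{1,2}\,\tilde{V}$ whose entries are exactly the leading singular values $\sigma_1\ge\sigma_2\ge\dots\ge\sigma_{\min\{l,m\}}$. Making this explicit is the first step.

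To carry it out, substitute $\bar{E}_{1,2}=U\Sigma V^T$ from \eqref{svd} and use $UU^T=I$ and $VV^T=I$. Because $\tilde{U}$ collects the first $l$ columns of $U$ and $\tilde{V}$ the first $m$ columns of $V$ (see \eqref{tU}), the products $\tilde{U}^T U$ and $V^T\tilde{V}$ are block matrices consisting of an identity block padded with zeros, so that
\bes
\bar{F}_{1,2}=\tilde{U}^T U\,\Sigma\,V^T\tilde{V}=\begin{bmatrix} I_l & 0\end{bmatrix}\,\Sigma\,\begin{bmatrix} I_m \\ 0\end{bmatrix},
\ees
which simply extracts the leading $l\times m$ block of the rectangular diagonal matrix $\Sigma$. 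That block is diagonal with entries $\sigma_1,\dots,\sigma_{\min\{l,m\}}$; hence the $k$-th diagonal entry of $\bar{F}_{1,2}$ equals $\sigma_k=\mbf{u}_k^T\,\bar{E}_{1,2}\,\mbf{v}_k$ for $k\le\min\{l,m\}$, consistent with Corollary~\ref{o1} in the scalar case $l=m=1$.

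It then remains to justify the qualitative claim that the first $i$ of these diagonal entries are the ``more efficient'' discriminative features. By Theorem~\ref{th1}, $\sigma_1$ is the largest value of $|\mbf{u}^T\bar{E}_{1,2}\mbf{v}|$ over unit vectors $\mbf{u}$ and $\mbf{v}$, i.e.\ the largest separation between $\bar{E}_1$ and $\bar{E}_2$ attainable by a single linear LBP feature; repeating the argument in the proof of Theorem~\ref{th1} after zeroing the first $k-1$ expansion coefficients shows that $\sigma_k$ is the largest such separation achievable by a feature orthogonal to the first $k-1$. Since the diagonal of $\bar{F}_{1,2}$ is non-increasing, keeping only its first $i$ entries discards exactly the smallest singular values --- the least discriminative directions --- while retaining the largest possible share of the total weight $\sum_{k}\sigma_k$ granted by the preceding theorem. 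This is the sense in which the first $i$ diagonal elements of $\bar{F}_{1,2}=\tilde{U}^T\bar{E}_{1,2}\tilde{V}$ best highlight the difference $\bar{E}_{1,2}=\bar{E}_2-\bar{E}_1$.

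The main obstacle is not computational --- the displayed identity is a one-line manipulation of the SVD --- but the interpretive step of pinning down what ``efficient feature'' should mean. The honest content is that $|\bar{F}_{1,2}(k,k)|=\sigma_k$ quantifies how strongly the $k$-th extracted feature separates the two class means, that this sequence is both maximal (by Theorem~\ref{th1}) and non-increasing, and that no orthonormal choice of $T$ and $H$ can beat it coordinatewise. Turning this into a guarantee about downstream classifier accuracy lies beyond the corollary and would require additional assumptions on the data distribution; I would therefore present the statement as a justification grounded in the extremal properties of the SVD and defer the empirical confirmation to the numerical experiments.
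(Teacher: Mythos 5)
Your proposal is correct and takes essentially the same route as the paper, which gives no formal proof at all: it simply observes that $\tilde{U}^T\,\bar{E}_{1,2}\,\tilde{V}$ is diagonal with the leading singular values on its diagonal and lets the corollary follow from the extremal property established in Theorem~\ref{th1} and the preceding theorem. Your explicit block computation of $\tilde{U}^T U\,\Sigma\,V^T\tilde{V}$, and your candid remark that ``efficient feature'' is an interpretive claim rather than a guarantee about classifier accuracy, are a faithful (and more careful) expansion of the paper's one-sentence justification.
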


\section{Optimal LBP values}\label{Sec6}
In this section, our objective is to identify the most effective LBP values for classification tasks. We focus on extracting fundamental LBP values that robustly differentiate between two classes, leveraging insights from SVD. By refining vectors derived from SVD, we aim to pinpoint key LBP values that enhance classification accuracy and robustness across diverse datasets. 
Algorithm \ref{alg2} is introduced to achieve this goal by selecting and refining optimal LBP values based on their contributions to the vectors obtained from SVD. 
\begin{algorithm}
\caption{: Optimal Transform Matrices.}\label{alg2}
\begin{algorithmic}[1]
\State \textbf{Input:} Matrix $\bar{E}_{1,2}$, integers $l=m$, $n$.
\State Initialize matrices $H = \text{zeros}(256, m)$ and $T = \text{zeros}(l, n^2)$.
\For{$i = 1$ to $m$}
    \State Apply SVD to $\bar{E}_{1,2}$ and find $\mathbf{v}_1$ from Corollary \ref{o1}.
    \State Initialize $\mathbf{h}_i$ as $\mathbf{h}_i = \text{zeros}(256, 1)$.
    \State Set $[\text{max}, \text{ind}] = \text{max}(\text{abs}(\mathbf{v}_1))$.
    \State Set $\mathbf{h}_i(\text{ind}) = 1$.
    \State Set $\mathbf{t}_i = \bar{E}_{1,2} \mathbf{h}_i$.
    \State Set $\mathbf{t}_i = \mathbf{t}_i / \|\mathbf{t}_i \|$.
    \State Set $H(:, i) = \mathbf{h}_i$ and $T(i, :) = \mathbf{t}_i^T$.
    \State Update $\bar{E}_{1,2}$ as $\bar{E}_{1,2} \leftarrow \bar{E}_{1,2} - (\bar{E}_{1,2} \mathbf{h}_i) \mathbf{h}_i^T$.
\EndFor
\State \textbf{Output:} Matrices $H$ and $T$.
\end{algorithmic}
\end{algorithm}

In this algorithm, LBP values that contribute the most to the SVD vectors are selected as optimal. The resulting refined LBP vectors are denoted as $\mbf{h}_i$ for $i=1, 2, ..., m$. These vectors yield left transform vectors $\mbf{t}_i$, represent the projection of matrix $\bar{E}_{1,2} $ onto $\mbf{h}_i$. The new algorithm sets the stage for improving classification outcomes by focusing on the most discriminative LBP values identified through SVD-based refinement.
\begin{figure}[h!]
\centerline{
\includegraphics[scale=.22]{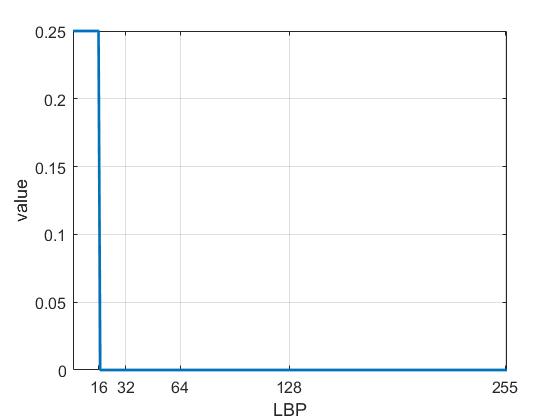}\hspace{0.15cm}
\includegraphics[scale=.22]{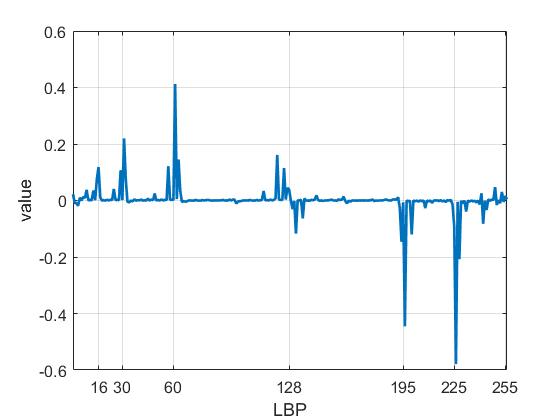}\hspace{0.15cm}
\includegraphics[scale=.22]{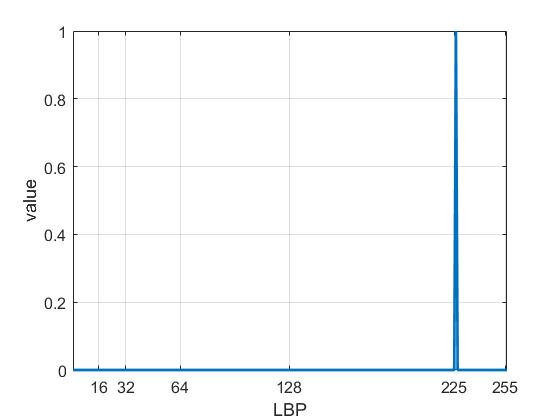}\hspace{0.15cm}
}
\centerline{
\includegraphics[scale=.22]{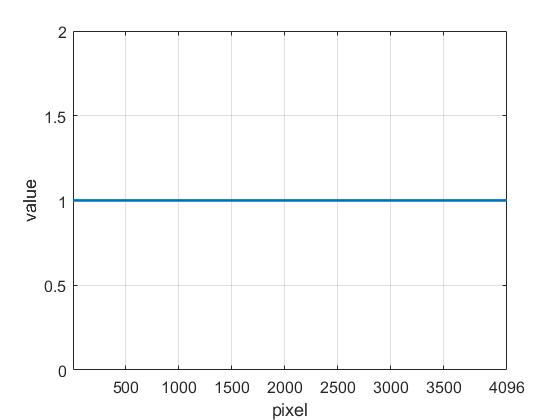}\hspace{0.15cm}
\includegraphics[scale=.22]{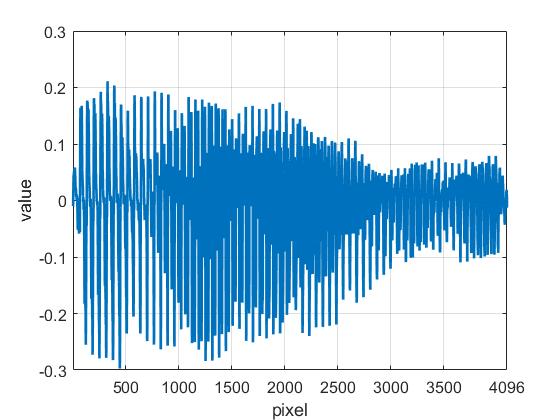}\hspace{0.15cm}
\includegraphics[scale=.22]{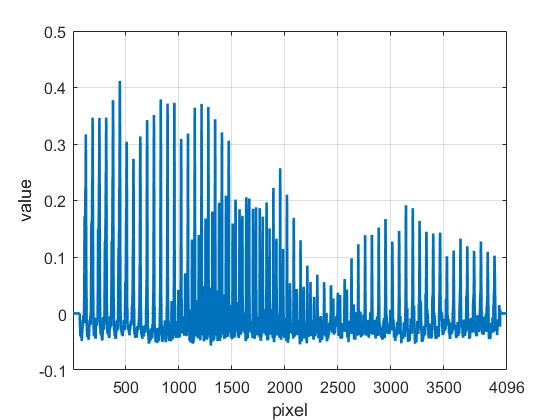}\hspace{0.15cm}
}
\caption{The first vector of the transform matrices, $T$ and $H$, for the standard LBP, SVD, and the proposed LBP are presented from left to right. The first and second rows are respect to the values of $H$ and $T$ matrices, respectively.}
\label{fig_sec6}
\end{figure}
In Figure \ref{fig_sec6}, the first vector of the left and right transform matrices is shown for the standard LBP, the SVD algorithm and the new proposed algorithm for the face detection classification problem. In the figure, the first row corresponds to the right transform and the second row corresponds to the left transform vectors.  From left to right, column 1 is for the standard LBP, column 2 is for the SVD algorithm, and the last column is for the new algorithm. 
The face images are sourced from databases CFD [55], CFD-MR [56], and CFD-INDIA [57], where the facial regions were manually cropped and resized to images of size $64 \times 64$. This combined database is referred to as CFD-T in this paper. Additionally, the clutter images are extracted from a database that contains no human faces. Some representative face and clutter images from these databases are shown in Figure \ref{fig6}. The studied images are grayscale, and the light intensity of pixels in the images is normalized by dividing their values by $255$.

As seen in  Figure \ref{fig_sec6}, in the standard LBP, the transformation matrices are fixed and independent of the classification. 
However, they vary with classification for both the SVD and the proposed algorithm. In SVD, LBP values  $224, 195$ and $60$ contribute most significantly to the right transform matrix, $H$. According to Algorithm \ref{alg2}, $lbp=224$ has the greatest impact on the first eigenvector of the SVD algorithm, making it the first optimal LBP value selected.
Then, the first vector of the right transform matrix in the proposed algorithm, has only one non-zero pixel, equals $1$ at the $224$-th position. Additionally, the first vector of the left transformation matrix, $T$, is predominantly positive for the new algorithm, whereas it exhibits both positive and negative values for the SVD. This highlights that the optimal LBP value focuses on more important facial features and reduces clutter.

\begin{figure}[htbp]
\centerline{
\includegraphics[scale=.33]{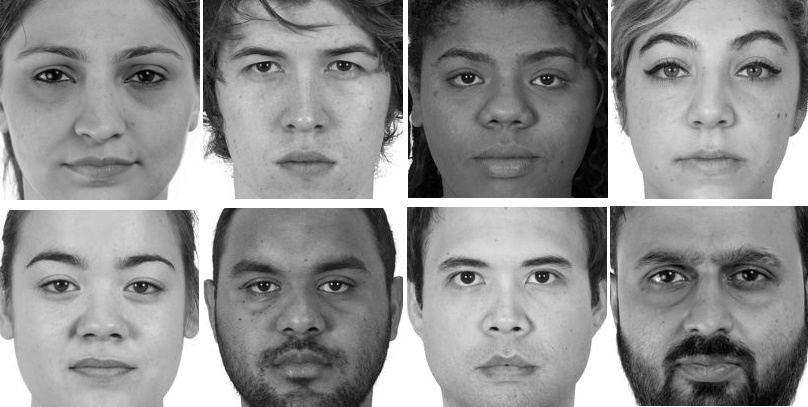}\hspace{0.5cm}
\includegraphics[scale=.33]{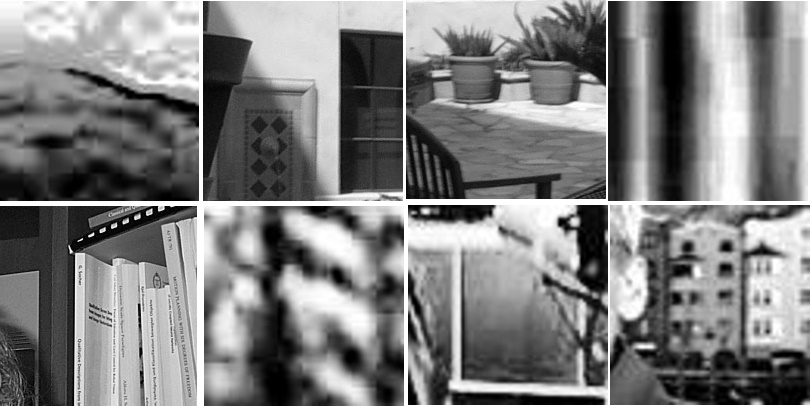}
}
\caption{Some human frontal face (left) and clutter (right) images in the face and clutter databases, respectively.}
\label{fig6}
\end{figure}

\begin{figure}[h!]
\centerline{
\includegraphics[scale=.36]{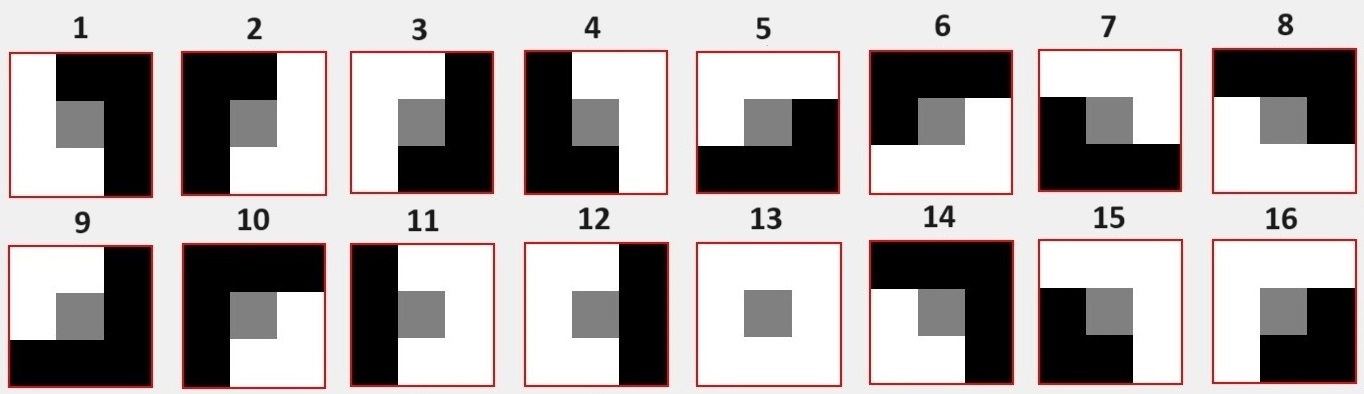}\hspace{0.5cm}
}
\caption{The 16 first LBP values in form of 8-bit binary number for face detection. The value of black and white cells are 0 and 1, respectively.}
\label{fig7}
\end{figure}

The $16$  most important LBP values are shown in Figure \ref{fig7}  in 8-bit binary numbers form. These were obtained from algorithm \ref{alg2} for  dataset CFD-T.  The white and black cells in Figure \ref{fig7} refer to 1 and 0 values. One can observe the selected LBP features are mostly paired, such as the  first and forth, the second and third, the fifth and sixth, and the seventh and eighth. Additionally, the black and white pixels are not dispersed much,  indicating uniform LBP values \cite{ojala2002multiresolution, khaleefah2020review}. 

To compare the effectiveness of the new optimal LBP features with the standard LBP features and those obtained by the SVD in dimension reduction, the residual norm of matrix $\bar{E}_{1,2} $, denoted as
\bes
R=\| \bar{E}_{1,2}  -  (\bar{E}_{1,2}   \, H) \, H^T  \| ,
\ees
is calculated for $m=1,2,..., 256$ and presented in Figure \ref{figNorm}. The residual diminishes slowly for the standard LBP, whereas it diminishes more rapidly for the other two algorithms, as observed in Figure \ref{figNorm}.
\begin{figure}[h!]
\centerline{
\includegraphics[scale=.4]{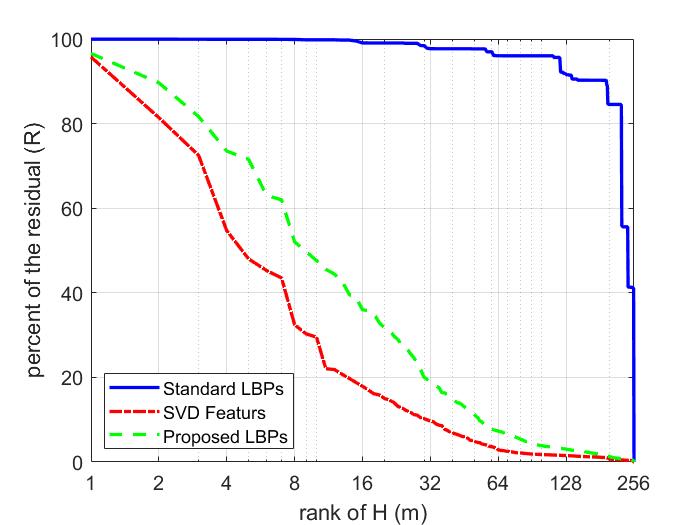}\hspace{0.5cm}
}
\caption{The percent of the residual error of $\bar{E}_{1,2} $ in field of the feature numbers, $m$, extracted by the standard LBP, SVD and the proposed algorithm.}
\label{figNorm}
\end{figure}

The fast reduction of the residual error leads to more efficient feature extraction and consequently more accurate classification results. In Figure \ref{fig9}, histograms of feature values extracted from face and clutter images of dataset CFD-T are shown for one and two dimensions. It is evident that the distributions are more effectively separated in SVD and the new algorithm. Note that the standard LBP is unable to distinguish between face and clutter images in these situations and requires more features to do so; typically, we use  $64$ or more features for standard LBP. 
\begin{figure}[h!]
\centerline{
\includegraphics[scale=.22]{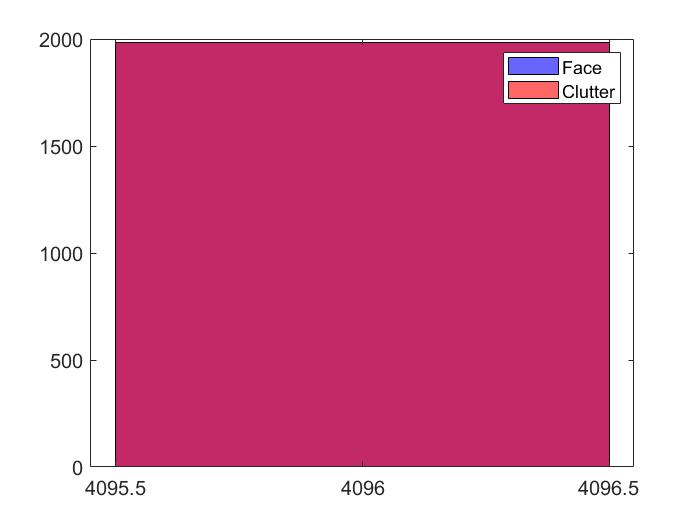}\hspace{0.15cm}
\includegraphics[scale=.22]{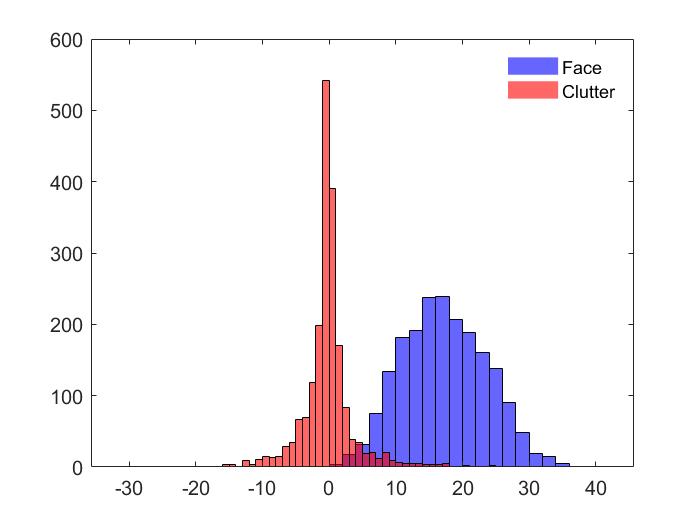}\hspace{0.15cm}
\includegraphics[scale=.22]{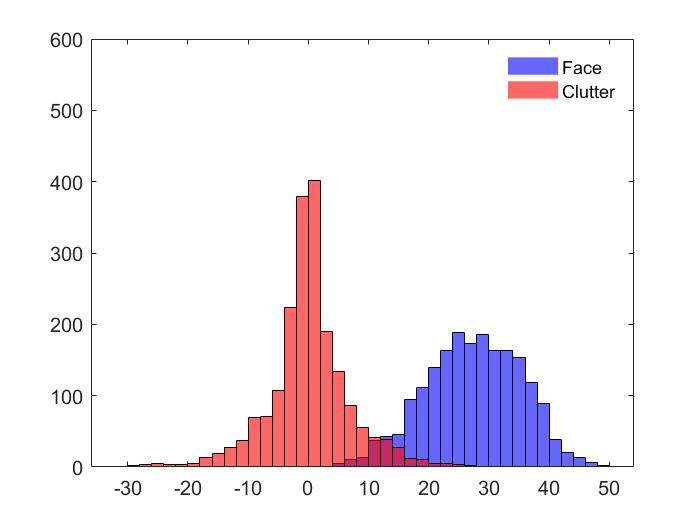}\hspace{0.15cm}
}
\centerline{
\includegraphics[scale=.27]{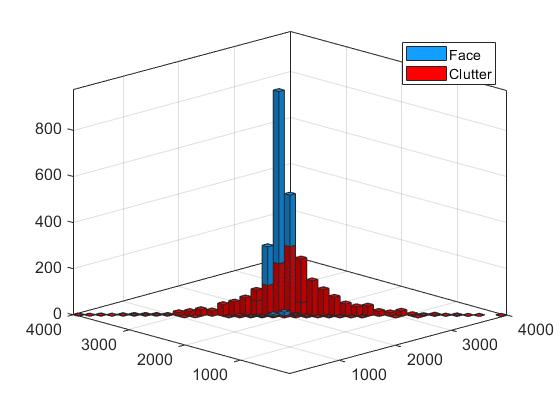}\hspace{0.15cm}
\includegraphics[scale=.27]{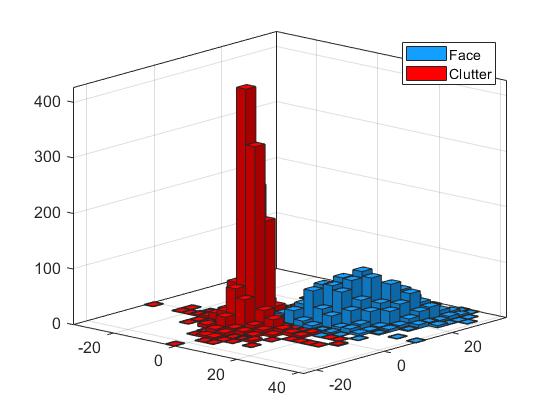}\hspace{0.15cm}
\includegraphics[scale=.27]{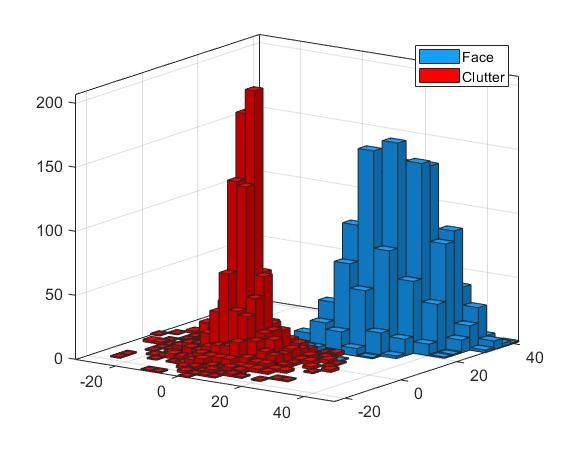}\hspace{0.15cm}
}
\caption{The histogram of face and clutter images for one and two dimensions. The images are respect to the standard LBP, SVD and proposed LBP features from left to right, respectively. SVD and the proposed algorithms are able to extract features that effectively distinguish between face and clutter images, whereas the standard LBP approach fails to do so.}
\label{fig9}
\end{figure}

\section{Numerical experiments}\label{Sec7}
This section is devoted to the numerical experiments for face detection and facial expression recognition. First, the accuracy rate of the new optimal LBP features, obtained from Algorithm \ref{alg2}, is reported in Subsection \ref{Sub7.1}, using the CFD-T dataset for face detection. In Subsection \ref{Sub7.2}, the optimal LBP features are extracted for facial expression recognition using the CK-Data dataset\cite{lucey2010extended}. In all case studies, $70$ percent of images in the dataset are used as training images and $30$ percent as testing images. The LBP features are extracted from the training images and applied to the testing images. Consequently, the accuracy is reported only for the testing images when the linear SVM is used for classification.

\subsection{face detection}\label{Sub7.1}
The accuracy of the classification for standard LBP, standard SVD, and the proposed LBP features is reported in Figure \ref{fig7.1} in terms of the number of features. From Figure \ref{fig7.1}, the accuracy of the classification is more than $99$ percent for face and clutter images when using standard SVD and the proposed LBP features, provided the number of features is more than or equal to $16$. However, the accuracy of the standard LBP is significantly lower for small numbers of features. This demonstrates the efficiency of the proposed LBP features, especially for small numbers of features.

\begin{figure}[h!]
\centerline{
\includegraphics[scale=.4]{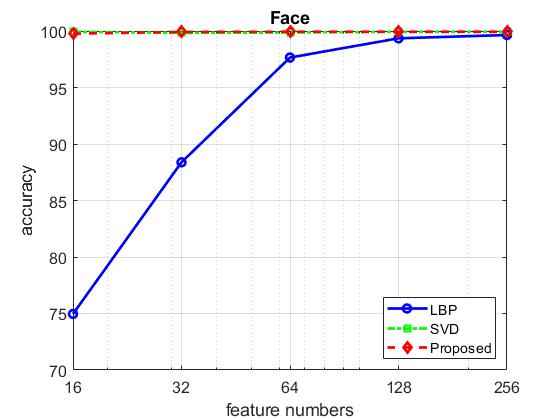}\hspace{0.5cm}
\includegraphics[scale=.4]{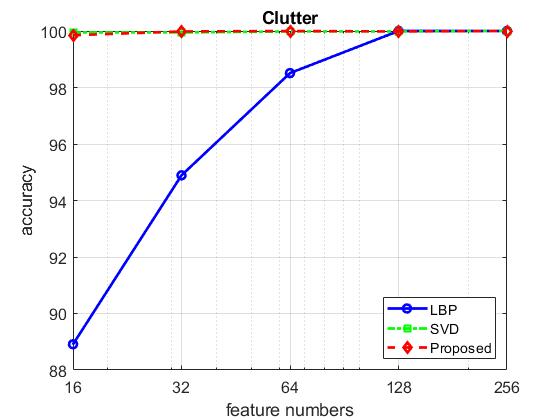}\hspace{0.5cm}
}
\caption{The accuracy of face and clutter classification.}
\label{fig7.1}
\end{figure}
We also apply the LBP feature extraction to a dataset containing 1,500 randomly selected faces from the UTKFace dataset \cite{zhifei2017cvpr} and 1,500 clutter images. The accuracy of the SVM classification exceeds $98\%$ for face and clutter images when using only $16$ new LBP features extracted by Algorithm \ref{alg2}. In contrast, this level of accuracy is achieved by the standard LBP only when $256$ features are extracted. This further confirms the efficiency of the proposed LBP features for classification.

\subsection{facial expression recognition}\label{Sub7.2}
In this subsection, we present the outcomes of our experiments implemented on the CK dataset. The dataset comprises 981 grayscale images of facial expressions, each measuring $48\times48$ pixels. These images are annotated into seven distinct emotional classes: anger (135 samples), contempt (54 samples), disgust (177 samples), fear (75 samples), happiness (207 samples), sadness (84 samples), and surprise (249 samples). In standard LBP, the images are partitioned into 16 regions of equal size, and a feature vector is generated by concatenating the LBP histograms of these regions, resulting in 256 features (see Figure \ref{fig32}). Additionally, 16 optimal LBP values are extracted for each facial expression using Algorithm \ref{alg2}. The optimal LBP values (in form of 8-bit binary numbers) are shown in Figure \ref{fig7.2}. Note that most of the optimal LBP values are duplicated across several expressions. Also, many of them, especially the first eight, are presented in Figure \ref{fig7} as the optimal LBP values for the face detection.

%

\begin{figure}[h!]
\centerline{
\includegraphics[scale=.38]{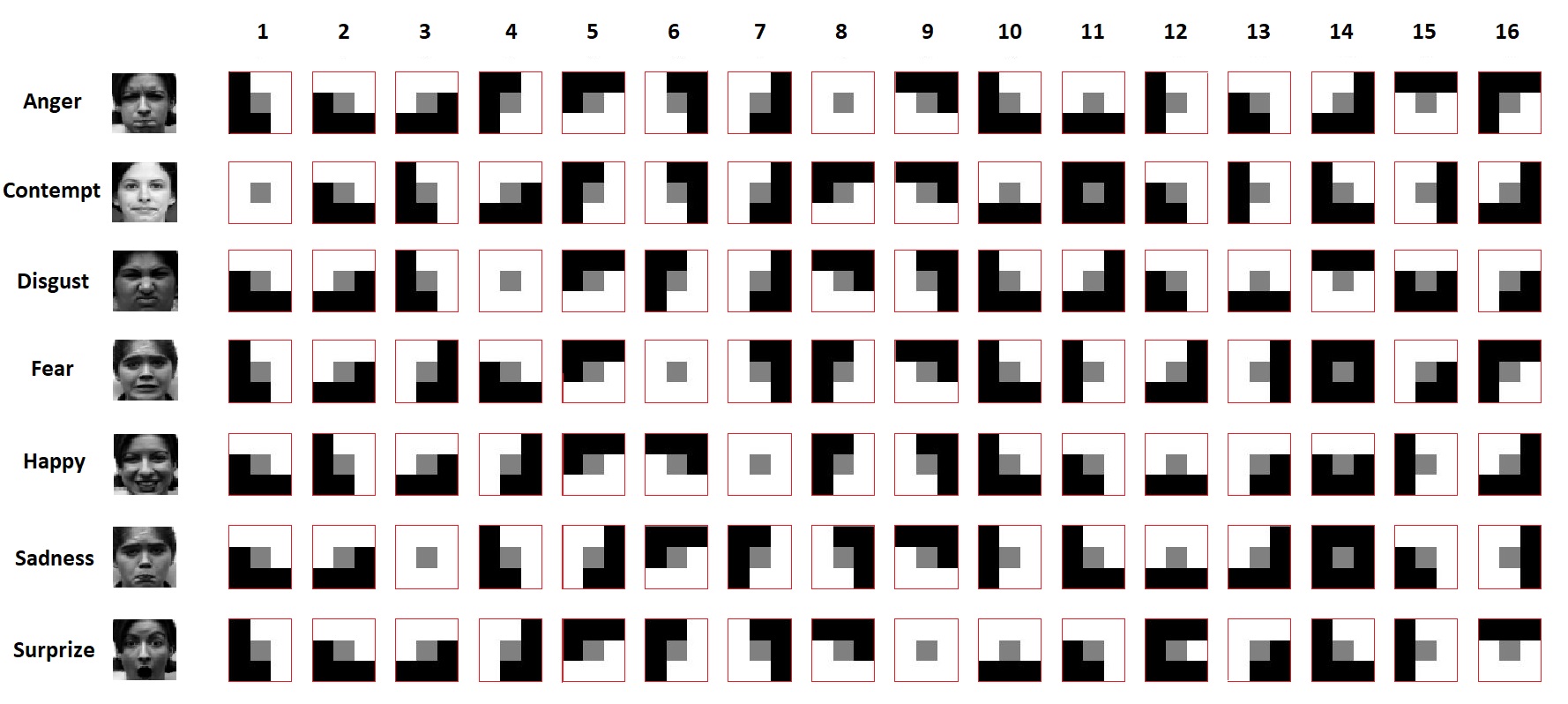}\hspace{0.5cm}
}
\caption{Optimal LBP values for facial expression recognition. The value of black and white cells are 0 and 1, respectively.}
\label{fig7.2}
\end{figure}
Then, the patterns extracted from the images were utilized for feature extraction by Algorithm \ref{alg2} and classification by SVM, with 70 percent of the images allocated for training and 30 percent for testing. Linear SVM is employed in the classification process. The confusion matrix of the classification is presented in Table \ref{tab:table2} for the standard LBP features and Table \ref{tab:table1} for the new LBP features.
\begin{table}[h!]
  \begin{center}
    \caption{Confusion matrix for the proposed LBPs.}
    \label{tab:table1}
    \begin{tabular}{ccccccccc}
    \hline
	\hline
	  class&  1 &  2 &  3 &  4 &  5 &  6 &  7 & ~~~~~~~~ \\
	\hline
      1    & 36 &  0 &  1 &  0 &  0 &  0 &  0 &  \\
      2    &  0 & 17 &  0 &  0 &  0 &  0 &  0 &  \\
      3    &  0 &  0 & 53 &  0 &  0 &  0 &  0 &  \\
      4    &  0 &  0 &  0 & 21 &  0 &  0 &  0 &  \\
      5    &  0 &  0 &  0 &  0 & 63 &  0 &  0 &  \\
      6    &  0 &  0 &  0 &  0 &  0 & 25 &  0 &  \\
      7    &  0 &  0 &  0 &  0 &  0 &  0 & 73 &  \\
      --   &  4 &  0 &  1 &  2 &  0 &  1 &  2 &   \\
     \hline
	 \hline
     \end{tabular}
  \end{center}
\end{table}
\begin{table}[h!]
  \begin{center}
    \caption{Confusion matrix for the standard LBP.}
    \label{tab:table2}
    \begin{tabular}{ccccccccc}
    \hline
	\hline
	 class &  1 &  2 &  3 &  4 &  5 &  6 &  7 & ~~~~~~~~\\
	\hline
    1      & 29 &  0 &  2 &  0 &  0 &  1 &  0 &  \\
    2      &  0 & 15 &  0 &  0 &  0 &  0 &  0 &  \\
    3      &  0 &  0 & 48 &  0 &  0 &  0 &  0 &  \\
    4      &  0 &  0 &  0 & 21 &  0 &  1 &  0 &  \\
    5      &  0 &  0 &  0 &  0 & 62 &  0 &  1 &  \\
    6      &  0 &  0 &  0 &  0 &  0 & 25 &  0 &  \\
    7      &  0 &  0 &  0 &  0 &  0 &  0 & 73 &  \\
    --     &  9 &  2 &  6 &  1 &  0 &  1 &  2 &   \\
     \hline
	 \hline
     \end{tabular}
  \end{center}
\end{table}

From the confusion matrices, the accuracy of the linear SVM classification for the new LBP features exceeds that of the standard LBP features. This improvement occurred despite the number of the new LBP features is less than half the number of features of the standard LBP.  We rune the program 20 times with random training and testing images, finding an average accuracy of approximately $90\%$ and $87\%$ for the proposed and standard LBP features, respectively. This demonstrates the enhanced accuracy of the proposed LBP features for facial expression recognition.

\section{Conclusion}\label{Sec8}
In this paper, we investigated the local binary pattern (LBP) process and proposed a novel algorithm based on singular value decomposition (SVD) to identify optimal LBP values for classification tasks. We extracted these optimal LBP values for applications in face detection and facial expression recognition, highlighting their effectiveness in classification. The optimal LBP values identified are uniform LBPs, meaning that the binary patterns exhibit at most two transitions between 0 and 1 or 1 and 0. This new strategy can be readily applied to other classification problems as well. In our approach, LBP features were extracted from the mean value of images within the datasets. However, more effective optimal LBPs can be achieved by utilizing a more appropriate linear combination of images instead of the mean value. Additionally, the optimization concept can be extended to larger pixel blocks, such as 5×5 and 7×7, allowing for the extraction of optimal LBP features in these contexts. Furthermore, this methodology can be adapted to other texture descriptors, such as Haar-like features, enabling further optimization.
\bibliographystyle{elsarticle-num}
\bibliography{mybibfile}

\newpage
\section*{Declaration}
\subsection*{Availability of Data and Materials}
Some datasets referred in the paper are analyzed during the current study are available. All materials used in this study are available for review and can be accessed upon request.
\subsection*{Competing Interests}
The authors declare that they have no competing interests. There are no financial or personal relationships that could influence the work reported in this paper.
\subsection*{Funding}
No funding was received for this study. 
\subsection*{Authors' Contributions}
Author 1: Conceptualization and methodology. \\
Author 2: Data analysis and writing original draft.\\
Author 3: Data analysis and writing programs.\\
All authors read and approved the final manuscript.
\subsection*{Acknowledgements}
The authors wish to acknowledge the contributions of all individuals and organizations involved in the research process, without naming specific persons or entities. Their collective efforts have been invaluable to the completion of this study.

\end{document}